\newtheorem{theorem}{Theorem}[]
\newtheorem{lemma}[]{Lemma}
\newtheorem{assumption}{Assumption}[]
\newtheorem{definition}{Definition}
\DeclareMathOperator*{\argmax}{arg\,max}
\newcommand{\comm}{\hfill $\triangleright$ {}}
\begin{document}
\title{A Change-Detection Based Thompson Sampling Framework for Non-Stationary Bandits}
\author{Gourab~Ghatak, {\it Member, IEEE}
\thanks{G. Ghatak is with the Department of ECE at IIIT Delhi. Email: gourab.ghatak@iiitd.ac.in.}
}
\IEEEtitleabstractindextext{
\begin{abstract}
    We consider a non-stationary two-armed bandit framework and propose a change-detection based Thompson sampling (TS) algorithm, named TS with change-detection (TS-CD), to keep track of the dynamic environment. The non-stationarity is modeled using a Poisson arrival process, which changes the mean of the rewards on each arrival. The proposed strategy compares the empirical mean of the recent rewards of an arm with the estimate of the mean of the rewards from its history. It detects a change when the empirical mean deviates from the mean estimate by a value larger than a threshold. Then, we characterize the lower bound on the duration of the time-window for which the bandit framework must remain stationary for TS-CD to successfully detect a change when it occurs. Consequently, our results highlight an upper bound on the parameter for the Poisson arrival process, for which the TS-CD achieves asymptotic regret optimality with high probability. Finally, we validate the efficacy of TS-CD by testing it for edge-control of radio access technique (RAT)-selection in a wireless network. Our results show that TS-CD not only outperforms the classical max-power RAT selection strategy but also other actively adaptive and passively adaptive bandit algorithms that are designed for non-stationary environments.
\end{abstract}
\begin{IEEEkeywords}
Multi-armed bandits, Thompson sampling, non-stationary bandits, millimeter-wave communication, 5G.
\end{IEEEkeywords}
\vspace{-1cm}
}

\vspace{-2cm}
\maketitle

\section{Introduction}
The \ac{MAB} problem is a sequential decision-making framework where an agent chooses one or multiple actions, from a set of actions, based on the feedback of rewards from the previous choices. The \ac{MAB} framework has found applications in the field of randomized clinical-trials~\cite{villar2015multi}, online recommendation systems~\cite{li2016collaborative}, computational advertisement~\cite{buccapatnam2017reward}, and wireless communications~\cite{rahman2019beam}.
In the basic \ac{MAB} setting, a decision maker plays (or chooses) one of the K independent arms (choices), and obtains a corresponding reward. The player repeats this experiment in a series of time-slots. Each of the K-independent arms is characterized by a reward distribution, which are not necessarily identical, and are unknown to the player. The goal of any bandit algorithm is to minimize the difference between the total rewards obtained by the player and the highest expected reward. To facilitate this, the player keeps a belief or value function associated to each arm at each time-slot, as the \ac{MAB} algorithm evolves, and plays an arm accordingly. This inherently has an exploration-exploitation dilemma associated to it, i.e., the player has to make a decision on whether to play the arm which currently has the highest belief/value, or explore other arms to update their belief/value functions. 

{\bf Related Work:}
In case the reward distributions of the arms remain stationary, several bandit algorithms, e.g., \ac{UCB}~\cite{contal2013parallel}, have been proven to perform optimally. Recently, the \ac{TS} algorithm, which was first introduced in~\cite{thompson1933likelihood}, has gotten considerable interest. Several studies have shown the efficiency of the \ac{TS} algorithm, albeit empirically, e.g., see~\cite{granmo2010solving}. In particular, for applications such as display advertising and online article recommendation, it outperforms other classical algorithms~\cite{chapelle2011empirical}. However, mathematical characterization of the \ac{TS} algorithm is in general difficult due to its randomized nature, unlike \ac{UCB}, which facilitates exploration by an additional term. In this regard, Agarwal and Goyal~\cite{agrawal2012analysis} have recently derived bounds on the regret of the \ac{TS} algorithm.

On the contrary, in case the reward distributions are non-stationary, bandit algorithms loose mathematical tractability. To keep track of the changing environments, in literature, two approaches have been proposed: i) passively adaptive, and ii) actively adaptive. Passively adaptive policies remain oblivious towards the time of changes and give more weights to the recent rewards with an aim to decrease the effect of the changes. Garivier and Moulines~\cite{garivier2008upper} have considered a scenario where the distribution of the rewards remain constant over epochs and change at unknown time instants. They have analyzed the theoretical upper and lower bounds of regret for the discounted \ac{UCB} (D-UCB) and sliding window \ac{UCB} (SW-UCB). 
Gupta et al.~\cite{gupta2011thompson}, extending the idea to Bayesian methods, have proposed dynamic \ac{TS}. By assuming a Bernoulli bandit environment where the success probability evolves as a Brownian motion, the authors suggest to decay the effect of past observations in the posterior distribution of the arm being updated. This is done by applying an exponential filtering to the past observations. Raj and Kalyani~\cite{raj2017taming} have formulated a Bayesian bandit algorithm for non-stationary environments. Derived from the classical \ac{TS} algorithm, their proposed method - discounted TS (dTS) - works by discounting the effect of past observations. Besbes {\it et al.}~\cite{besbes2014stochastic} have developed a near optimal policy, REXP3. The authors have established lower bounds on the performance of any non-anticipating policy for a general class of non-stationary reward distributions. In principle, the REXP3 algorithms runs the classical EXP3 algorithm in different blocks by resetting the parameters in the beginning of each block. {The optimal exploration exploitation tradeoff for non-stationary environments were studied by the authors in~\cite{besbes2019optimal}}.

On the contrary, actively adaptive bandit algorithms track the dynamic environment and take certain actions (e.g., restarting the algorithm) when the variation of the environment is detected. Hartland et al.~\cite{hartland2006multi} have considered dynamic bandits with {abrupt changes} in the reward generation process, and proposed an algorithm called {Adapt-EvE}. It uses a Page-Hinkley statistical test (PHT) based change point detection technique and utilizes a meta bandit formulation for exploration-exploitation dilemma. However, they did not provide a theoretical study about these procedures, and their performance evaluation is mainly empirical. PHT  has also been used to adapt the window length of SW-UCL~\cite{srivastava2014surveillance}. However, the regret bounds of Adapt-EvE and adaptive SW-UCL are still open problems.
{Two works which are the closest to ours are the ones by Yu {\it et. al}~\cite{yu2009piecewise} and Cao {\it et al.}~\cite{cao2019nearly}, where the authors sense a change in the environment by detecting a change in the empirical means of the rewards of the arms. However, in both these works, the authors assume a fixed number of changes withing an interval to derive the respective bounds. We make no such stringent restriction; on the contrary, we assume random number of changes, and statistically characterize the probability that the changes occur more frequently than what the algorithm can keep track of. Moreover, we provide tunable parameters of false-alarm and missed detection probabilities in the change detection framework, which is missing from~\cite{yu2009piecewise, cao2019nearly}. Another strong assumption in~\cite{yu2009piecewise} is that their algorithm needs to query and observe the past rewards of some unpicked arms, which is a very important assumption. In fact, in most practical systems, such as the one considered in our case-study, this may be impractical. Accordingly, our proposal makes no such assumption. Finally, following the change-detection framework, in contrast to \cite{yu2009piecewise, cao2019nearly}, we employ the \ac{TS} algorithm which has been empirically shown to outperform index-based policies, but notoriously challenging to mathematically characterize.}

In summary, passively adaptive algorithms give more tractable theoretical guarantees. However, it has been demonstrated experimentally that actively adaptive algorithms outperform passively adaptive ones~\cite{mellor2013thompson}. 
Following this line of work, in this paper, we consider a two-armed bandit setting and propose a variant of the \ac{TS} algorithm leveraging a mean-estimation based change-detection framework that actively adapts the algorithm once a change is detected. The contributions of the paper are as follows.

\subsection{Contributions and Organization}
\begin{itemize}
    \item We {adopt} a mean-estimation based change detection framework, where, using the reward returns of the previous plays of the arms, the framework maintains two sets of sequences at any time-step $n$: i) the rewards for the last $n_T$ plays, termed as the {\it test distribution} and ii) the rewards from the play $n - n_T - N$ to $n-n_T$, termed as the {\it estimate distribution}. The framework detects a change when the mean of the above two sequences differ by more than a threshold $\Delta_C$. Following this, we derive the bounds on $n_T$ and $N$ and study the conditions under which the proposed algorithm is able to detect a change. Accordingly, we mathematically characterize the probability of false-alarm and the probability of missed-detection of the proposed algorithm.
    \item Based on the change-detection framework, we propose a \ac{TS} algorithm called \ac{TS-CD} which refreshes the \ac{TS} parameters when a change is detected. To the best of our knowledge, no other work in literature has investigated actively adaptive \ac{TS} algorithms, theoretically. Then, we study the probability of the failure of the framework, by taking into account the i) probability that changes occur faster than the detection framework, ii) the error in estimation of the means, and iii) the probability of missed detection. Finally, we derive the upper bound on the regret of the proposed \ac{TS-CD} algorithm and study the probability with which the regret follows this bound.
    \item To validate the efficacy of the algorithm, we employ it in a wireless network equipped with decentralized control for \ac{RAT} switching between sub-6GHz and \ac{mm-wave} bands, where the rewards are characterized by using stochastic geometry. We show that the proposed framework outperforms the classical max-power band-selection strategy and also other bandit algorithms that are designed for tackling non-stationary environments.
\end{itemize}

\vspace{-0.3cm}
\section{The Two-Armed Bandit Setting}
\label{sec:Bandit}
Let us consider a two-armed bandit framework, with arms $a_i$, where $i \in \{1,2\}$. At each time step $n$, the player selects an arm and observes a corresponding reward $R_{a_i}(n)$. We assume that the reward of arm $a_i$ has a normal distribution\footnote{Gaussian distribution has been previously considered in bandit literature, e.g.,~\cite{srivastava2014surveillance}.}, i.e., $R_{a_i}(n) \sim \mathcal{N}\left(\mu_i(n),\sigma^2\right)$, with unknown, non-stationary mean and known, fixed variance $\sigma^2$. Additionally, we consider a lower bound on the minimum difference between the mean reward of the two arms at any instant of time, i.e.,
\begin{align}
  \left| \mu_i(n) - \mu_j(n) \right| \geq \Delta_\mu, \quad {i \neq j}, \forall n. \label{eq:mindelmu}
\end{align}
Furthermore, we assume a lower bound on the minimum difference, {$\Delta_m$} between the mean rewards of the same arm across the time slots when the changes occur, i.e.,
{
\begin{align}
  \left| \mu_i(n) - \mu_i(k) \right| \in \{0\} \cup [\Delta_m,\infty), \quad n \neq k. \label{eq:mindel}
\end{align}
}
In this paper, we will consider the case when $\mu_i(n)$s are piece-wise stationary. {The values of $\mu_i(n)$ are assumed to change at the unknown time-instants, $T_{C_l}$, where $l = 1, 2, \ldots$, with $T_{C_0}$ assumed to be at $n = 0$.} The change instants follow a Poisson arrival process with parameter $\lambda_C$, and accordingly, the time-window between the changes, i.e., {$T_{C_{l+1}} - T_{C_{l}}$ is exponentially distributed: 
\begin{align}
        \mathbb{P}\left(T_{C_{l+1}} - T_{C_{l}} \leq k\right) = 1 - \exp\left(-\lambda_C k\right).
        \label{eq:change} 
\end{align}
The mean of both the arms are assumed to change simultaneously at $T_{C_l}$.
}

{\bf: Regret}
An algorithm $\pi$ to solve the \ac{MAB} problem needs to decide at any time step $n$, which arm to play (say $a_{\pi}(n)$) and obtains the reward $R_{a_\pi}(n) \sim \mathcal{N}\left(\mu_{a_\pi}(n), \sigma^2\right)$, based on the arms chosen and the rewards obtained in time steps $0, 1, 2, \ldots, n-1$.
Let at a time-step $n$, the $a_i$ arm be the optimal arm, i.e., $\mu^*(n) = \mu_i(n) >  \mu_j(n)$, $i \neq j$. Then, the regret of the algorithm after playing $T$ rounds is given by:
\begin{align}
    \mathcal{R}(T) = \sum_{n=0}^T \left(\mu^*(n) - \mu_{a_{\pi}}(n)\right). \nonumber 
\end{align}
The aim of \ac{MAB} algorithms is to develop policies $\pi$ which bounds the regret in an expected sense, i.e., $\mathbb{E} \left[\mathcal{R}(T)\right]$. For that, we present the change-detection algorithm in the following section that tracks the non-stationary environment.

{\bf\ac{TS-CD}:}
To actively detect the change when it occurs we propose \ac{TS-CD} in Algorithm 1, where $T_N$ is derived in \eqref{eq:TN}, $n_T$ is derived in \eqref{eq:nT}, and $\Delta_C$ is defined in \eqref{eq:DelC}. In particular, $n_T$ is the number of time-slots after a change at $T_{C_i}$ required to detect the change. Furthermore, $T_N$ refers to the minimum number of time-slots for which the \ac{MAB} framework remains stationary after the detection of the change. In the next section, we will mathematically characterize $n_T$ and $T_N$ in terms of the system parameters.
\begin{algorithm}[!h]
\begin{algorithmic}[1]
 \STATE ${B_{1i}} \gets 0; {B_{2i}} \gets 0; \quad \forall i\in\{1, 2, \ldots, K\} $ \comm{initializing parameters}
    \STATE $t \gets 0$
    \WHILE{$1$}
        \STATE $\theta_i \sim \beta({B_{2i}}+ 1, {B_{1i}} -{B_{2i}} +1)$ \comm{draw from Beta dist.}
        \STATE $a_j \gets a_i | \theta_j = \max(\theta_i)$ \comm{choose the better arm}
        {
        \STATE $R_\pi(n) \gets  \frac{R_{a_j}(n) - \mu_{\min} + \sigma Q^{-1}\left(\epsilon_b\right)}{2\sigma Q^{-1}\left(\epsilon_b\right) + \mu_{\max} - \mu_{\min}}$ \comm{play the chosen arm evaluate the Bernoulli parameter}
        }
        \STATE $R^*$ = Bern $(R_\pi(n))$ \comm{Bernoulli Trial}
        \STATE ${B_{1j}}  \gets {B_{1j}} + 1 - R^*$ \comm{update the beta distribution}
        \STATE ${B_{2j}}  \gets {B_{2j}} + R^*$ \comm{update the beta distribution}
        \STATE $\mathcal{S}_j = \mathcal{S}_j \cup \{R_j\}$
        \STATE count = count + 1  \comm{update counter}
        \STATE $n \gets n+1$
        \IF{ (count $\geq T_N$)} \comm{change-detection phase starts}
            \STATE $a_j \gets \argmax_i \{\hat{\mu}_i(T_N)\}$ \comm{select the current best arm}
            \STATE $\mathcal{S}_j = \mathcal{S}_j \cup \{R_j\}$ \comm{repeatedly play current best arm}
            \IF{(4)}  \comm{detect the change}
                \STATE ${B_{1i}} \gets 0; {B_{2i}} \gets 0; \quad   i\in\{1, 2\}$ \comm{refresh parameters}
                \STATE count $\gets 0$; \comm{reset the counter}
            \ENDIF
        \ENDIF
    \ENDWHILE
\end{algorithmic}
 \caption{\ac{TS-CD}}
 \label{algo:ts}
\end{algorithm}

In \ac{TS-CD}, first we initialize the parameters of the beta distribution for the two arms, $B_{1i}$ and $B_{2i}$, for $i \in \{1,2\}$ respectively with 0. This is consistent with the classical \ac{TS} algorithm, and is due to the fact that we do not assume any prior information about the mean of the rewards. Then, at each time-step, we sample from the beta distribution for the two arms and play the arm which returns the larger sample (say arm $a_j$). Consequently, we receive a reward $R(n)  = R_{a_j}(n)$. {It must be noted that due to our assumption of Gaussian distributed rewards, the range of $R(n) \in (-\infty, \infty)$. However, for the sake of tractability and to employ the \ac{TS} algorithm of~\cite{agrawal2012analysis}, we map $R(n)$ to the range $[0,1]$. For both the arms, let us define two boundary points $L$ and $U$ as $
    \mathcal{P}(L \leq R_{a_j(n)} \leq U) \geq 1 - 2\epsilon_b, \quad \forall j \in \{1,2\}, $
where, $\epsilon_b$ is an arbitrary small positive number that defines the boundaries $L$ and $U$. Naturally, the values of $U$ and $L$ depend on the maximum and minimum values of $\mu_j$, respectively. Accordingly, let us define:
\begin{align}
    \mu_{\max} &= \max {\mu_j(n)}, \quad, \forall j, n \quad \mbox{and,} \nonumber \\
    \mu_{\min} &= \min {\mu_j(n)}, \quad, \forall j, n \nonumber 
\end{align}
as the maximum and the minimum mean rewards of the arms across all the time instants. Thus, the values of $U$ and $L$ can be calculated as:
\begin{align}
    U &= \sigma Q^{-1}\left(\epsilon_b\right) + \mu_{\max} \nonumber \\
    L &= \mu_{\min} - \sigma Q^{-1}\left(\epsilon_b\right)  \nonumber
\end{align}
Accordingly, we define:
\begin{align}
    R_{\pi}(n) = \frac{R_{a_j}(n) - L}{U - L} \nonumber
\end{align}
which ensures that $0 \leq R_{\pi}(n) \leq 1$ with a probability $1 - 2 \epsilon_b$.} and perform a Bernoulli trial with a success probability $R_{a_j}$. This step is similar to the algorithm presented in~\cite{agrawal2012analysis}. Following the result of the Bernoulli trial, the parameters of the beta distribution for the belief of the arm $a_j$ is updated, i.e., {$B_{1j}$} is augmented by 1 in case the Bernoulli trial results in a failure, and the parameter {$B_{2j}$} is augmented by 1 in case the the Bernoulli trial results in a success. The sequence of the rewards for each arm $j$ is stored in a reward set $\mathcal{S}_j$. We assume that the distribution of the rewards remain stationary for $T_F \geq T_N + n_T$ time steps after every time a change occurs. After $T_N$ time-steps (as tracked by the variable {\it count}) since the last detected change, the change-detection part of the algorithm initiates (step 13 - 20). For count $\geq T_N$, until the change is detected, the player repeatedly chooses the arm (say arm $a_j$) which had the best empirical mean at the time-step when count $=T_N$ (i.e., after $T_N$ time-steps since the last detected change). {Note that the step 14, and consequently the step 15, is necessary since our \ac{MAB} framework considered is based on \ac{TS}, which is a randomized algorithm as compared to classical index-based policies. Thus, the absence of steps 14 and 15 would result in a non-zero (albeit small) probability of playing the non-optimal arm in a particular stationary regime, which would limit the change detection efficacy.} The set $\mathcal{S}_j$ is {then} updated in each step for $a_j$ {(i.e., the optimal arm), until the change is detected}. From the set $\mathcal{S}_j$, we create two subsets: i) the test sequence: the rewards for the last $n_T$ plays of arm $a_j$, and ii) the estimate sequence: the rewards for the last $N$ rewards before the previous $n_T$ plays. In other words, in case the cardinality of $\mathcal{S}_j$ is $L$, the test-sequence consists of $\mathcal{S}_j(L-n_T : L)$ and the estimate sequence consists of $\mathcal{S}_j(L - n_T - N_j : L-n_T-1)$. The change is detected when the mean of the test sequence ($\mu_\text{test}$) differs from the mean of the estimate sequence ($\hat{\mu}_{i}$) by more than $\Delta_C$. Once a change is detected, the parameters are reset (step 13).

Before proceeding to the mathematical characterization of the algorithm, we make the following important assumption on the number of plays of a \ac{TS} algorithm for the stationary two-armed bandit.
\begin{assumption}
For the classical \ac{TS} algorithm in the stationary two-armed bandit framework, given a set of reward distribution for the two arms (i.e., for a given $\mu_1$ and $\mu_2$), the number of plays of the two arms is approximately equal to the respective mean number of plays. In other words, across different realizations of the \ac{TS} algorithm for a given $\mu_1$ and $\mu_2$, the number of plays of the arm $i$, $N_i$, follows $N_i \approx \mathbb{E}[N_i]$.
\label{assumption:mean}
\end{assumption}
Assumption 1 is needed for Lemma~\ref{lem:T_N}, and we may relax this assumption under the guarantee that the optimal arm is played $T_N$ times, as explined later.

\vspace{-0.3cm}
\section{Mathematical characterization of \ac{TS-CD}}
\label{sec:math}
As mentioned in the last section, we detect a change when the empirical mean of the test sequence differs from the mean of the estimate sequence by a factor greater than $\Delta_C$. In other words, we detect a change, when for an arm $a_i$:
\begin{align}
   \left| {\underbrace{\frac{1}{n_T} \sum_{p = L - n_T}^{L} \mathcal{S}_i(p)}_{\mu_\text{test}}} - {\underbrace{\frac{1}{N_i} \sum_{q = n- n_T - N_i}^{n - n_T} \mathcal{S}_i(q)}_{\hat{\mu}_i}} \right|  \geq  \Delta_C. \label{eq:criterion} 
\end{align}
Here $\mu_\text{test}$ refers to the estimate of the test distribution and $\hat{\mu}_i$ refers to the estimate of the mean of the reward of arm $a_i$ obtained from the estimate sequence. Let us assume that a change occurs at time $T_{C_l}$ and is detected at a time $n = T_{D_l} > T_{C_l}$. For evaluating $\mu_\text{test}$, out of the $n_T$ elements of the test sequence, let us assume that $n_1$ samples $({X_1}, X_2, \ldots, X_{n_1})$ are from the distribution $X_i \sim \mathcal{N}\left(\mu_i(T_{C_l} - 1), \sigma^2\right)$ and $n_2$ samples $({Y_1}, Y_2, \ldots, {Y_{n_2})}$ are from the distribution $Y_i \sim \mathcal{N}\left(\mu_i(T_{C_i}+1), \sigma^2\right)$.. In order to simplify the notation, we denote $\mu_i(T_{C_l}-1)$ by $\mu_x$ and $\mu_i(T_{C_l}+1)$ by $\mu_y$. In other words, the mean of the optimal arm $a_i$ changes from $\mu_x$ to $\mu_y$ at $T_{C_i}$, with the condition that $|\mu_x - \mu_y| \geq \Delta_m$. Furthermore, without loss of generality, we assume that\footnote{The case where $\mu_y > \mu_x$ follows similarly.} $\mu_y < \mu_x$. Thus, the detection criterion is:
\begin{align}
   \left| \frac{1}{n_T} \left( \sum_{p = n- n_T}^{n-n_T + n_1} X_p  +  \sum_{p = {n-n_T + n_1}}^{n} Y_p \right)\right.  \nonumber \\ \left.
    -\frac{1}{N_i}  \sum_{q = n- n_T - N_i}^{n - n_T}  X_q \right| \geq \Delta_C.
   \label{eq:CD}
\end{align}
For the mathematical characterization of the change detection framework, let us first informally introduce the following events, which we will make mathematically precise later in this section. {We outline here that these events are listed in the following order since the events $E_{2}$ and $E_3$ become relevant if and only if $E_1$ does not occur.}
\begin{itemize}
    \item $E_1$: Two consecutive changes in the arms occur too often for the detection to keep track. {In other words, for the \ac{TS-CD} algorithm to precisely estimate the mean of the optimal arm, and then detect a change when it occurs, the \ac{MAB} framework needs to be stationary for a period of time characterized by the change frequency bound, as discussed in Lemma 3.}
    \item $E_2$: The estimate of the mean of arm $i$ is not accurate. {That is, given that $E_1$ does not occur, and the \ac{MAB} framework experiences sustained periods of stationarity before a change, the \ac{TS-CD} algorithm $N_T$ number of samples to accurately estimate the mean reward of the optimal arm, as characterized in Lemma 1.}
    \item $E_3$: The change detection framework is not able to detect the change. {In other words, given that $E_1$ and $E_2$ do not occur, the \ac{TS-CD} algorithm needs sufficient number ($n_T$) of samples to detect a change when it occurs, as characterized in Lemma 2.}
\end{itemize}
Considering $E_2$, let us first define the accuracy of our estimate of $\hat{\mu}_i$.
\begin{definition}
$\hat{\mu}_i$ is said to be {well-localized} if for some $\epsilon \geq 0$, 
\begin{align}
    \mu_i - \epsilon \leq \hat{\mu}_i \leq {\mu}_i + \epsilon . \nonumber 
\end{align}
\end{definition}
The Chernoff-Hoeffding bound states that for a large value of $N_i$, this occurs with a high probability, i.e.,
\begin{align}
    {
    \mathbb{P}\left(\hat{\mu}_i < \mu_i + \epsilon \quad \cup \quad  \hat{\mu}_i > \mu_i - \epsilon\right) \geq 1 - 2\exp\left( - N_i \epsilon^2\right). \nonumber
    }
\end{align}
Accordingly, in the following lemma, we will characterize the number of plays of a stationary \ac{TS} algorithm so that sufficient plays of the optimal arm occurs so as to have $\mu_i$ well-localized.
\begin{lemma}
Under Assumption~\ref{assumption:mean}, in a time-window where the \ac{MAB} framework remains stationary, with $a_i$ being the optimal arm, the minimum number of plays of the \ac{TS-CD} framework for the estimate of the mean of the best arm, $\mu_i$ to be well-localized with probability greater than $1 - p_{loc}$ is given by:
\begin{align}
    T_N = \frac{-40}{\Delta_\mu^2} \mathcal{W}\left(- \exp\left(\frac{-40}{\Delta_\mu^2}\left(\frac{1}{\epsilon}\ln \left(\frac{1}{p_{loc}}\right) - \frac{48}{\Delta_\mu^4}\right)\right)\frac{\Delta^2_\mu}{40}\right),
    \label{eq:TN}
\end{align}
where $\mathcal{W}(\cdot)$ is the Lambert-W function.
\label{lem:T_N}
\end{lemma}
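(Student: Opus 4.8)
The plan is to convert the target confidence $1-p_{loc}$ on $\hat\mu_i$ into a lower bound on the number $N_i$ of pulls of the optimal arm available inside a stationary window, then convert that, via the classical Thompson-sampling regret bound, into a condition on the total number of pulls $T_N$, and finally invert that condition in closed form. For the first conversion, fix a stationary window in which $a_i$ is optimal, so $\mu_i-\mu_j=\Delta\ge\Delta_\mu$. Given $N_i$ reward samples of $a_i$ in the estimate sequence, the Chernoff--Hoeffding bound displayed above gives $\mathbb{P}(\hat\mu_i\text{ not well-localized})\le 2\exp(-N_i\epsilon^2)$; requiring the right-hand side to be at most $p_{loc}$ yields a lower bound of the form $N_i\ge\frac{1}{\epsilon}\ln\frac{1}{p_{loc}}$ on the samples of $a_i$ needed in the estimate sequence. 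Hence it suffices to show that within $T_N$ total pulls the optimal arm has been selected at least $\frac{1}{\epsilon}\ln\frac{1}{p_{loc}}$ times.

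For the second conversion, I bound the pulls of the sub-optimal arm $a_j$. Because the Gaussian rewards are first rescaled into $[0,1]$ and a Bernoulli surrogate with that mean is fed to the Beta--Bernoulli update, exactly as in~\cite{agrawal2012analysis}, the two-armed regret bound of Agrawal and Goyal transfers to a fresh stationary epoch: the expected number of pulls of $a_j$ after $T_N$ rounds is at most $\frac{40\ln T_N}{\Delta^2}+\frac{48}{\Delta^4}\le\frac{40\ln T_N}{\Delta_\mu^2}+\frac{48}{\Delta_\mu^4}$, the last step using $\Delta\ge\Delta_\mu$. By Assumption~\ref{assumption:mean} the realised count may be replaced by its mean, so within $T_N$ rounds the optimal arm is pulled $N_i=T_N-N_j\ge T_N-\frac{40\ln T_N}{\Delta_\mu^2}-\frac{48}{\Delta_\mu^4}$ times. (If one prefers not to invoke Assumption~\ref{assumption:mean}, one instead appeals to the detection phase of Algorithm~\ref{algo:ts}, which from the moment the counter reaches $T_N$ pulls the empirically best arm every round; that arm is the true optimal arm with high probability by the localization statement itself, so the optimal arm is then pulled $T_N$ times outright.)

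Combining the two conversions, it is enough to take $T_N$ with $T_N-\frac{40}{\Delta_\mu^2}\ln T_N\ge c$, where $c$ collects the two right-hand sides, $c=\frac{1}{\epsilon}\ln\frac{1}{p_{loc}}+\frac{48}{\Delta_\mu^4}$, and the smallest admissible $T_N$ attains equality. Writing the equality as $T_N-a\ln T_N=c$ with $a=40/\Delta_\mu^2$ and exponentiating, $T_N e^{-T_N/a}=e^{-c/a}$, i.e.\ $-\frac{T_N}{a}e^{-T_N/a}=-\frac{1}{a}e^{-c/a}$, so $-\frac{T_N}{a}=\mathcal{W}(-\frac{1}{a}e^{-c/a})$ and $T_N=-a\,\mathcal{W}(-\frac{1}{a}e^{-c/a})$; substituting $a$ and $c$ and rounding up to an integer produces the Lambert-$\mathcal{W}$ expression in~\eqref{eq:TN}.

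The delicate points lie outside this algebra. First, one must verify that within a stationary epoch the first $T_N$ rounds genuinely behave as a fresh stationary Thompson-sampling instance -- which is exactly what the parameter reset on detection guarantees -- so that the Agrawal--Goyal bound may be invoked with horizon $T_N$, and that this bound is legitimately inherited through the $[0,1]$-rescaling and the Bernoulli surrogate (the rescaled rewards are not Bernoulli, but their Bernoullised versions are, which is precisely the setting analysed in~\cite{agrawal2012analysis}). Second, the Lambert-$\mathcal{W}$ inversion is meaningful only when the argument $-\frac{1}{a}e^{-c/a}$ lies in $[-1/e,0)$ and when the branch corresponding to the larger root is taken; this amounts to a standing smallness requirement on $p_{loc}$ and $\epsilon$ together with a lower bound on $\Delta_\mu$ -- exactly the regime in which the lemma is used later -- and should be recorded as a hypothesis. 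I expect the first point, cleanly importing the stationary regret bound into this phase of the algorithm under the reward transformation, to be the main obstacle; once that regime is fixed, the Lambert-$\mathcal{W}$ bookkeeping is routine.
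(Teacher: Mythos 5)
Your proposal follows essentially the same route as the paper's own proof: Chernoff--Hoeffding to get the required number $N_i \geq \frac{1}{\epsilon}\ln\frac{1}{p_{loc}}$ of optimal-arm pulls, the Agrawal--Goyal bound on $\mathbb{E}[N_j]$ together with Assumption~\ref{assumption:mean} to convert this into a condition of the form $T - \frac{40}{\Delta_\mu^2}\ln T \geq c$, and then a Lambert-$\mathcal{W}$ inversion. Your more careful bookkeeping (the constant $c = \frac{1}{\epsilon}\ln\frac{1}{p_{loc}} + \frac{48}{\Delta_\mu^4}$ with a plus sign, and the exponent $-c\Delta_\mu^2/40$ rather than $-40c/\Delta_\mu^2$ inside $\mathcal{W}$) actually repairs sign and inversion slips present in the paper's displayed derivation, and your remarks on the admissible range of the $\mathcal{W}$ argument and branch choice are a useful addition the paper omits.
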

\begin{proof}
The number of plays of the optimal arm, i.e., $N_i$, for the following condition to hold true:
$
    \mathbb{P}\left(\hat{\mu}_x \leq \mu_x - \epsilon\right) < p_{loc},
$
is naturally, $N_i \geq \frac{1}{\epsilon} \ln \frac{1}{p_{loc}}$. Note that we are only considering the lower limit of the mean of the optimal arm since the sub-optimal arm has a mean lower than the optimal arm. Now, we have to characterize the number of plays of a stationary \ac{TS} algorithm so that $N_i$ plays of the optimal arm takes place, so as to have the estimate of its mean well-localized.
We know that in $T$ plays of the TS algorithm, the expected number of plays of the sub-optimal arm $a_j$ is bounded as~\cite{agrawal2012analysis}:
$
    \mathbb{E}\left[N_j\right] \leq \frac{40 \ln T}{\Delta^2} + \frac{48}{\Delta^4} + 18. 
$
Consequently, given Assumption 1, we want the number of plays of the optimal arm to satisfy the following criterion:
\begin{align}
    &T - \frac{40 \ln T}{\Delta^2} + \frac{48}{\Delta^4} + 18 \geq \frac{1}{\epsilon} \ln \frac{1}{p_{loc}}, \nonumber \\
    \implies & \exp\left(T\right) T^{\frac{-40}{T}} \geq \exp\left(\frac{1}{\epsilon} \ln \frac{1}{p_{loc}} - \frac{48}{\Delta^4} - 18\right), \nonumber \\
    \implies & T \geq T_N =  \nonumber \\
    &\frac{-40}{\Delta_\mu^2} \mathcal{W}\left(- \exp\left(\frac{-40}{\Delta_\mu^2}\left(\frac{1}{\epsilon}\ln \left(\frac{1}{p_{loc}}\right) - \frac{48}{\Delta_\mu^4}\right)\right)\frac{\Delta^2_\mu}{40}\right).  \nonumber 
\end{align}
\vspace{-0.5cm}
\end{proof}
Lemma~\ref{lem:T_N} characterizes the minimum number of time-steps for which, if the two-armed bandit framework remains stationary, the mean of the optimal arm is well-localized. In Fig.~\ref{fig:T_N} we plot the bound on $T_N$ with varying $p_{loc}$ for different values of $\Delta_\mu$. Naturally, with increasing value of $p_{loc}$ or $\epsilon$, the constraint on the accuracy of the estimate of $\mu_i$ becomes less stringent and accordingly, the lower bound on $T_N$ decreases.  Similarly, the lower bound on $T_N$ also decreases with {increasing} $\Delta_\mu$. This is due to the fact that if the the mean rewards of both the arms are well-separated, the \ac{TS} framework results in a larger number of plays of the first arm. This results in a better estimate of $\mu_i$ with a lower value of $T_N$.
Now, once the mean of the optimal arm is well-localized, we have to detect a change, if it occurs. Accordingly, let us consider the case of failure of detection of change, given that $\hat{\mu}_i$ is well-localized, i.e., we consider $E_3$. The question that we will try to answer is: what should be the minimum number of samples $n_T$ so as to have sufficient confidence of detection.
\begin{lemma}
Given that $\mu_i$ is well localized at $\mu_x$, for a false-alarm probability $\mathcal{P}_F$, to limit the probability of failure of change-detection to $\mathcal{P}_M$, the number of samples in the test set is:
\begin{align}
     n_T = \frac{1}{\Delta_\mu}\left( \sqrt{\ln \frac{1}{\mathcal{P}_M}} + \sigma Q^{-1}\left(\mathcal{P}_F\right) \right). 
     \label{eq:nT}
\end{align}
\end{lemma}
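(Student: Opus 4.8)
The plan is to turn the two operating requirements — a false-alarm probability at most $\mathcal{P}_F$ and a missed-detection probability at most $\mathcal{P}_M$ — into two inequalities relating $n_T$ and the threshold $\Delta_C$ of \eqref{eq:DelC}, and then to eliminate $\Delta_C$ and solve for $n_T$. Throughout I work on the event (coming from $E_2$, controlled by Lemma~\ref{lem:T_N}) that $\hat{\mu}_i$ is well-localized at $\mu_x$, so the estimate-sequence mean may be treated as $\mu_x$ up to the slack $\epsilon$, which I fold into $\Delta_C$; the only quantity left to control probabilistically is then the test-sequence mean $\mu_\text{test}$.

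\emph{False-alarm constraint.} When no change has occurred the whole test window consists of $n_T$ i.i.d.\ $\mathcal{N}(\mu_x,\sigma^2)$ rewards, so $\mu_\text{test}-\hat{\mu}_i$ is (approximately) zero-mean Gaussian. Imposing $\mathbb{P}\big(|\mu_\text{test}-\hat{\mu}_i|\ge \Delta_C\big)\le\mathcal{P}_F$ and inverting the Gaussian tail $Q(\cdot)$ pins $\Delta_C$ down in terms of $\mathcal{P}_F$, $\sigma$ and $n_T$ — this is precisely the content of \eqref{eq:DelC}, and it is what contributes the $\sigma\,Q^{-1}(\mathcal{P}_F)$ term.

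\emph{Missed-detection constraint.} Now a change has actually occurred; referring to the decomposition of the test window into $n_1$ pre-change samples $X_p\sim\mathcal{N}(\mu_x,\sigma^2)$ and $n_2=n_T-n_1$ post-change samples $Y_p\sim\mathcal{N}(\mu_y,\sigma^2)$ from \eqref{eq:CD}, I would argue that the detector is stressed most when $\mu_\text{test}$ is closest to $\hat{\mu}_i$, i.e.\ when as much pre-change data as possible survives in the window and $|\mu_x-\mu_y|$ sits at its floor $\Delta_\mu$ (WLOG $\mu_y<\mu_x$). In that regime $\mu_\text{test}-\hat{\mu}_i$ concentrates near $-\Delta_\mu$ and a miss is the event $\mu_\text{test}-\hat{\mu}_i>-\Delta_C$; bounding this probability via the Chernoff--Hoeffding inequality quoted above and requiring it to be at most $\mathcal{P}_M$ yields the second inequality, which produces the $\sqrt{\ln(1/\mathcal{P}_M)}$ term. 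Substituting the $\Delta_C$ from the false-alarm step and solving for $n_T$ then gives \eqref{eq:nT}.

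\emph{Main obstacle.} The delicate part is the mixture bookkeeping in the test window: making precise that the worst split $(n_1,n_2)$ is the one retaining the maximum of pre-change data, and verifying that the detection delay needed to reach (and get past) that split is itself of order $n_T$, so that the $n_T$ of \eqref{eq:nT} is genuinely sufficient rather than merely necessary. A secondary nuisance is keeping the normalizations of the two tail bounds compatible, and absorbing the well-localization slack $\epsilon$ into $\Delta_C$, so that the combined inequality collapses to the clean closed form $n_T=\frac{1}{\Delta_\mu}\big(\sqrt{\ln(1/\mathcal{P}_M)}+\sigma\,Q^{-1}(\mathcal{P}_F)\big)$.
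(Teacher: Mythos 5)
Your false-alarm step is exactly the paper's: model the no-change test mean as Gaussian around $\hat{\mu}_x$, invert the $Q$-function, and read off $\Delta_C$ as in \eqref{eq:DelC}; your missed-detection step uses the same Chernoff--Hoeffding bound and the same elimination of $\Delta_C$. The one place you diverge --- and where your sketch has a real gap --- is the treatment of the mixed test window. You propose to take the worst case over the split $(n_1,n_2)$ and assert it is the split ``retaining the maximum of pre-change data.'' Taken literally this cannot work: as $n_1\to n_T$ the test mean concentrates at $\mu_x$, the miss probability tends to $1$, and no finite $n_T$ satisfies the constraint. The paper does not optimize over splits at all. Its failure event is evaluated with the test window consisting \emph{entirely} of post-change samples ($n_1=0$, $n_2=n_T$), giving
\begin{align}
\mathbb{P}\left(\text{Failure}\right)=\mathbb{P}\left(\tfrac{1}{n_T}\textstyle\sum_p Y_p>\hat{\mu}_x-\Delta_C\right)\le\exp\left(-\left(\Delta_m-\epsilon-\Delta_C\right)^2 n_T\right),\nonumber
\end{align}
and there is nothing to verify about ``the delay needed to reach that split'': $n_T$ is by definition the number of post-change plays after which the detector is required to fire, the algorithm plays the current best arm repeatedly during the detection phase (steps 14--15), and the standing assumption $T_F\ge T_N+n_T$ guarantees the window does refresh before the next change. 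So the mixture bookkeeping you flag as the main obstacle is dissolved by the definition of the miss event, not solved by a worst-case argument over $(n_1,n_2)$.

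Two smaller points. The gap entering the miss bound is $\Delta_m$ from \eqref{eq:mindel} (the within-arm jump size), not the between-arm gap $\Delta_\mu$; the $\Delta_\mu$ in the lemma statement is a notational slip in the paper, and your identification of the floor of $|\mu_x-\mu_y|$ with $\Delta_\mu$ inherits it. Also, if you carry out the final elimination honestly, $\exp\bigl(-(\Delta_m-\sigma Q^{-1}(\mathcal{P}_F)/\sqrt{n_T})^2 n_T\bigr)=\mathcal{P}_M$ solves to $n_T=\frac{1}{\Delta_m^2}\bigl(\sqrt{\ln(1/\mathcal{P}_M)}+\sigma Q^{-1}(\mathcal{P}_F)\bigr)^2$; the unsquared expression in \eqref{eq:nT} does not follow from that algebra, so your anticipated ``clean collapse'' to the stated closed form will not land exactly where the lemma does.
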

\begin{proof}
\begin{align}
    &\mathbb{P}\left(\text{Failure} \right)  = \mathbb{P}\left(\frac{1}{n_T} \sum_{p = n - n_T}^n Y_p > \hat{\mu}_x - \Delta_C \right), \nonumber \\
    & \leq  \mathbb{P}\left(\frac{1}{n_T} \sum_{p = n - n_T}^n Y_p > {\mu}_x - \epsilon - \Delta_C \right), \nonumber \\
    & \leq \mathbb{P}\left(\frac{1}{n_T} \sum_{p = n - n_T}^n Y_p > \mu_y + \Delta_m - \epsilon - \Delta_C \right), \nonumber \\
    & \leq \exp\left(- \left(\Delta_m - \epsilon - \Delta_C\right)^2 n_T\right). \label{eq:failure} 
\end{align}
Next we describe the choice of $\Delta_C$. The answer to this lies in the tolerable maximum false-alarm rate. Let the acceptable probability of false alarm be $\mathcal{P}_{F}$. False alarm occurs when a change is detected even though all the samples of the test distribution are from $X$.
\begin{align}
    &\mathbb{P}\left(\text{False Alarm}\right) = \mathbb{P}\left(\frac{1}{n_T} \sum_{p = n - n_T}^n X_p > \hat{\mu}_x - \Delta_m \right), \nonumber \\
    & \leq \mathbb{P}\left(\frac{1}{n_T} \sum_{p = n - n_T}^n X_p > {\mu}_x + \epsilon - \Delta_m \right) \nonumber \\
    &= Q\left(\frac{\sqrt{n_T}\left(\epsilon + \Delta_m\right)}{\sigma}\right). \nonumber 
\end{align}
Equating this to $\mathcal{P}_F$, we get:
\begin{align}
    \Delta_C = \frac{\sigma Q^{-1}\left(\mathcal{P}_F\right)}{\sqrt{n_T}} - \epsilon . \label{eq:DelC}
\end{align}
Substituting this in \eqref{eq:failure}, we have:
\begin{align}
    \mathbb{P}\left(\text{Failure with } n_T \text{ samples}  \right) \nonumber \\
    \leq \exp\left(- \left(\Delta_m - \frac{\sigma Q^{-1}\left(\mathcal{P}_F\right)}{\sqrt{n_T}}\right)^2 n_T\right).
\end{align}
Now equating this to $\mathcal{P}_M$, we get:
\begin{align}
    \exp\left(- \left(\Delta_m - \frac{\sigma Q^{-1}\left(\mathcal{P}_F\right)}{\sqrt{n_T}}\right)^2 n_T\right) = \mathcal{P}_M, \nonumber \\
    \implies n_T = \frac{1}{\Delta_m}\left( \sqrt{\ln \frac{1}{\mathcal{P}_M}} + \sigma Q^{-1}\left(\mathcal{P}_F\right) \right). 
\end{align}
This completes the proof.
\end{proof}
In Fig.~\ref{fig:n_T} we plot the lower bound on $n_T$ with respect to the probability of missed detection, for different allowable false alarm probabilities, $\Delta_m$, and $\sigma$. Naturally, with a less stringent constraint on the probability of missed detection, the bound on $n_T$ decreases. Interestingly, the effect of $\Delta_m$ and $\sigma$ on $n_T$ is much more, as compared to the probability of missed detection and the probability of false alarm.

Now that we have characterized the lower bounds on $n_T$ and $T_N$, we study the bounds on $\lambda_A$ so that the changes occur less frequently than the \ac{TS-CD} algorithm is able to localize the optimal arm and track the changes. That is, we focus on the event $E_1$.
\begin{lemma}
To limit the probability of the frequency of change to $p_{change}$, the bound on the value of $\lambda_A$ is:
\begin{align}
    \lambda_A \leq  \frac{1}{n_T + T_N} \ln\left(\frac{1}{1-p_{change}}\right).
\end{align}
\end{lemma}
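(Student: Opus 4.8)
The plan is to translate the informal description of the event $E_1$ into a precise statement about the inter-change times and then invert a monotone exponential tail. Recall that, by Lemma~\ref{lem:T_N}, after a change occurs at $T_{C_l}$ the framework must remain stationary for at least $T_N$ steps for the mean of the optimal arm to be well-localized, and, by Lemma 2, a further $n_T$ samples of the test sequence are required to declare the change. Hence \ac{TS-CD} ``keeps up'' with the change at $T_{C_l}$ precisely when the next change does not arrive before $T_{C_l} + (n_T + T_N)$; equivalently, $E_1$ — ``two consecutive changes occur too often for the detection to keep track'' — is exactly the event $\{\, T_{C_{l+1}} - T_{C_l} \leq n_T + T_N \,\}$, consistent with the convention $T_F \geq T_N + n_T$ adopted in Section~\ref{sec:Bandit}.

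First I would invoke the exponential law of the inter-change times in \eqref{eq:change}, which follows from modelling the change instants as a Poisson process of rate $\lambda_A$; by the memorylessness of this process each inter-arrival gap $T_{C_{l+1}} - T_{C_l}$ is independent and identically $\mathrm{Exp}(\lambda_A)$-distributed, so it suffices to control the probability for a single generic gap. Substituting $k = n_T + T_N$ into \eqref{eq:change} gives
\begin{align}
    \mathbb{P}\left(E_1\right) = \mathbb{P}\left(T_{C_{l+1}} - T_{C_l} \leq n_T + T_N\right) = 1 - \exp\left(-\lambda_A (n_T + T_N)\right). \nonumber
\end{align}

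Next I would impose the design requirement $\mathbb{P}(E_1) \leq p_{change}$ and solve for $\lambda_A$: the inequality $1 - \exp\left(-\lambda_A (n_T + T_N)\right) \leq p_{change}$ rearranges to $\exp\left(-\lambda_A (n_T + T_N)\right) \geq 1 - p_{change}$, and taking logarithms (valid since $0 < 1 - p_{change} \leq 1$) yields $\lambda_A (n_T + T_N) \leq \ln\!\left(\tfrac{1}{1-p_{change}}\right)$, which is the claimed bound after dividing by $n_T + T_N > 0$.

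The main obstacle is not the algebra, which is a one-line inversion of a strictly decreasing function, but rather the modelling step of justifying that $n_T + T_N$ is precisely the stationarity window the algorithm needs; this rests on Lemmas~\ref{lem:T_N} and~2 and on the assumption $T_F \geq T_N + n_T$. I would also flag that this bound controls only the \emph{per-change} failure event $E_1$: if one wants a guarantee uniform over all changes in a horizon $T$, a union bound over the (random) number of Poisson arrivals in $[0,T]$ is required, which is where the interplay with $E_2$ and $E_3$ enters the subsequent regret analysis.
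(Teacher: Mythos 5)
Your proof is correct and takes essentially the same approach as the paper: the paper's entire justification is the one-line remark that the bound ``directly follows from the exponential distribution of the inter-change times'' in \eqref{eq:change}, which is precisely your substitution of $k = n_T + T_N$ followed by inverting the monotone exponential tail. Your additional observations (identifying $E_1$ with the event $\{T_{C_{l+1}} - T_{C_l} \leq n_T + T_N\}$ via the stationarity window from Lemmas~\ref{lem:T_N} and~2, and flagging the per-change versus horizon-uniform distinction) are sound elaborations of what the paper leaves implicit.
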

This directly follows from the exponential distribution of the inter-change times \eqref{eq:change}. In Fig.~\ref{fig:lambda_A} we plot the bound on $\lambda_A$ with respect to $p_{change}$ for different values of $p_{loc}$. Naturally, for less stringent requirement of $p_{change}$, the upper-bound on $\lambda_A$ is higher. On the other hand, as $p_{loc}$ increases, the values of $T_N$ decreases, and the bound on $\lambda_A$ increases. As seen in Fig.~\ref{fig:T_N} and Fig.~\ref{fig:n_T}, the values of $n_T$ is much smaller than $T_N$, and consequently, the term $T_N + n_T$ is approximately equal to $T_N$. Thus, the effect of $\mathcal{P}_F$ and $\mathcal{P}_M$ on the bound of $\lambda_A$ is less as compared to $p_{loc}$.

\begin{figure*}[!t]
\centering
\subfloat[]
{\includegraphics[width = 0.25\textwidth, height = 0.225\textwidth]{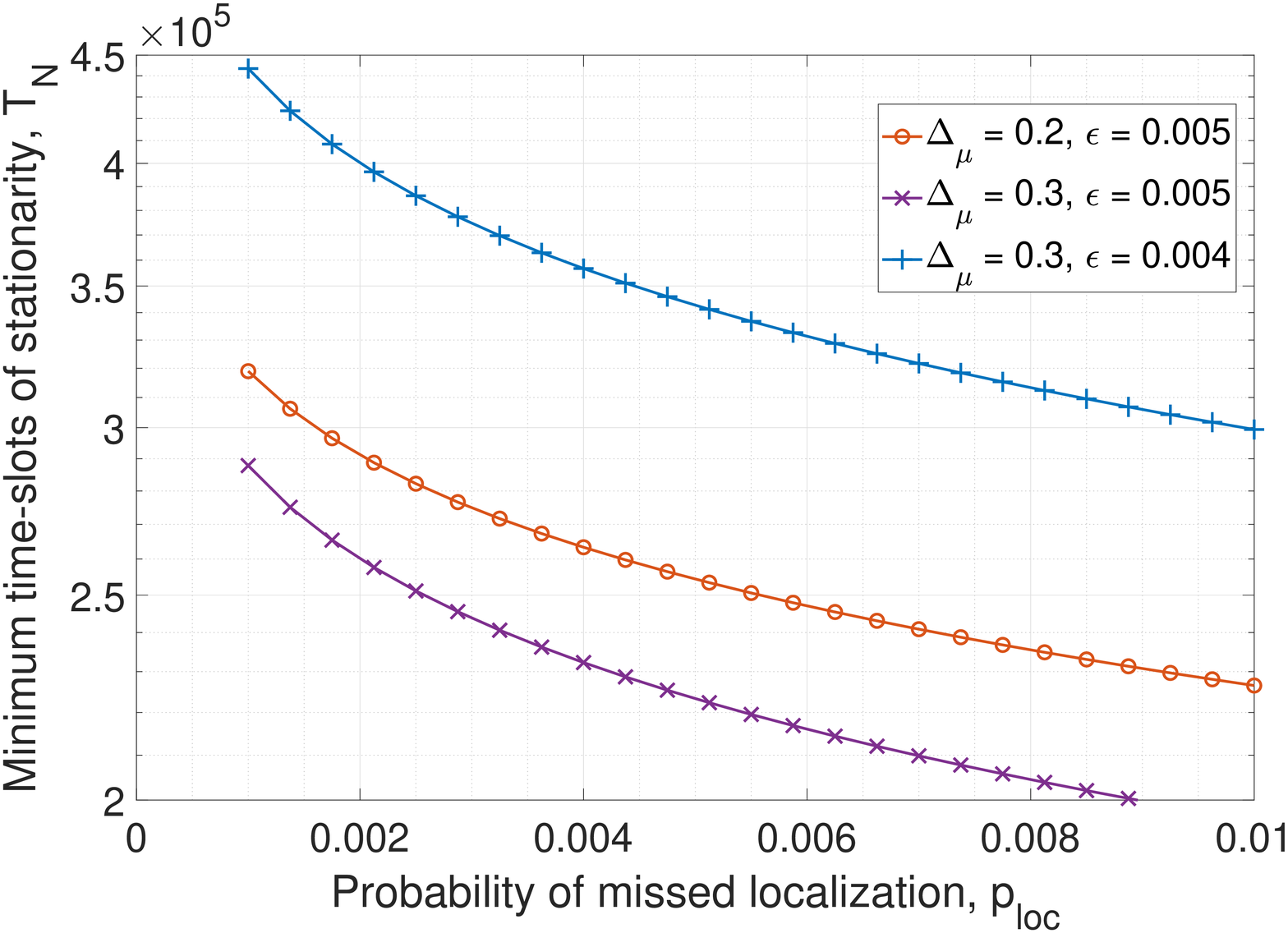}
\label{fig:T_N}}
\subfloat[]
{\includegraphics[width = 0.25\textwidth , height = 0.225\textwidth]{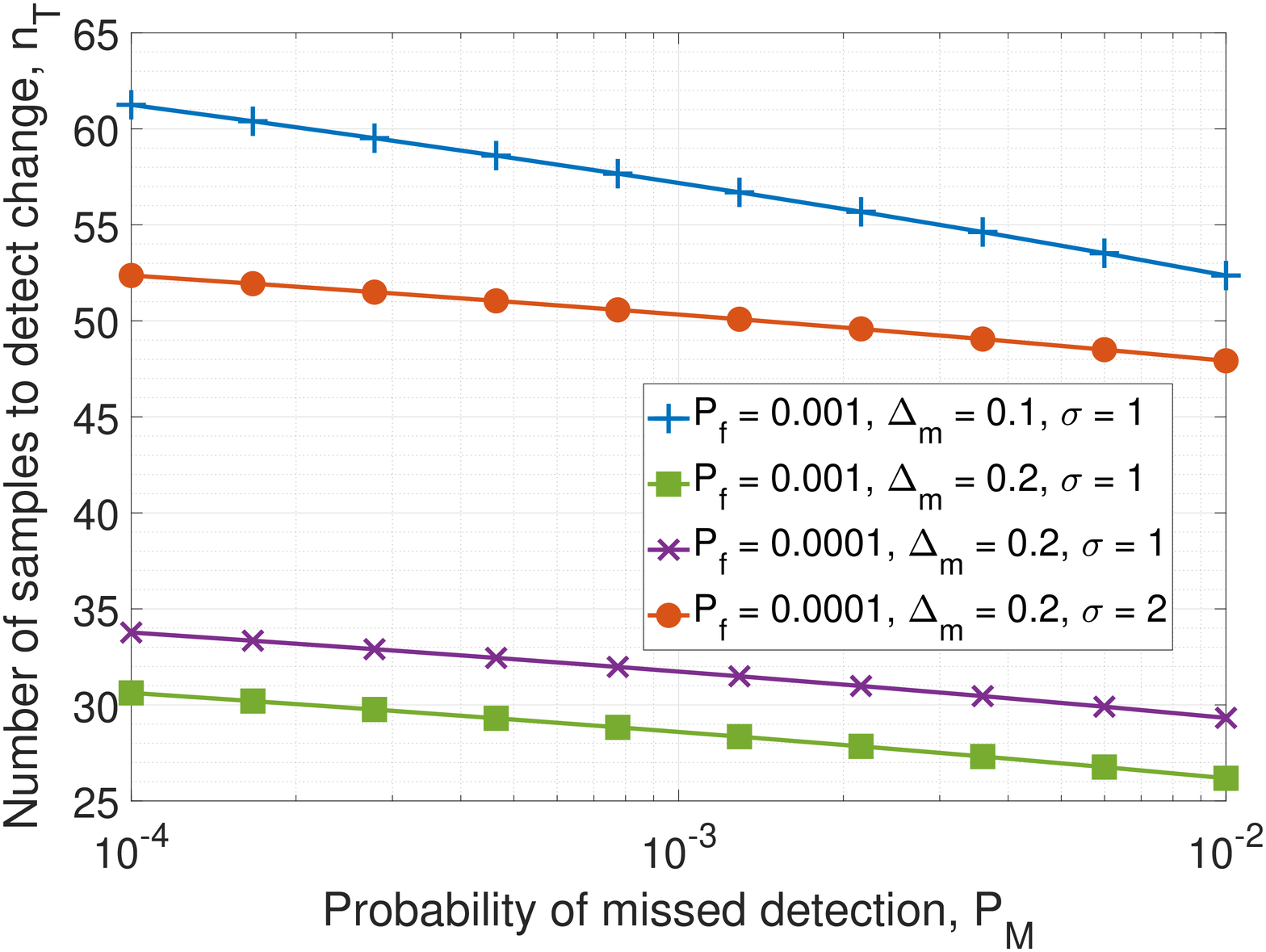}
\label{fig:n_T}}
\subfloat[]
{\includegraphics[width = 0.25\textwidth, height = 0.225\textwidth]{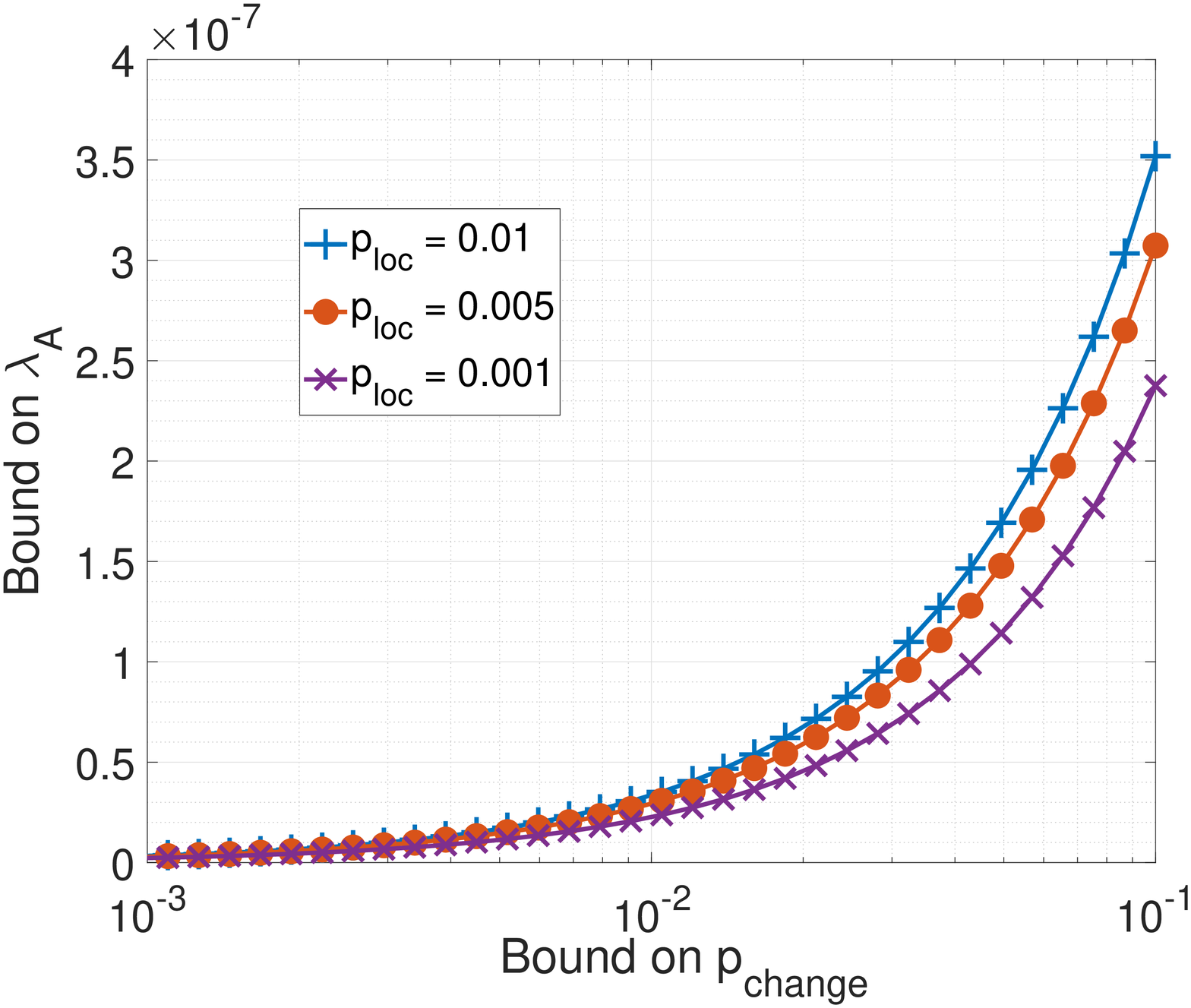}
\label{fig:lambda_A}}
\subfloat[]
{\includegraphics[width = 0.25\textwidth, height = 0.225\textwidth]{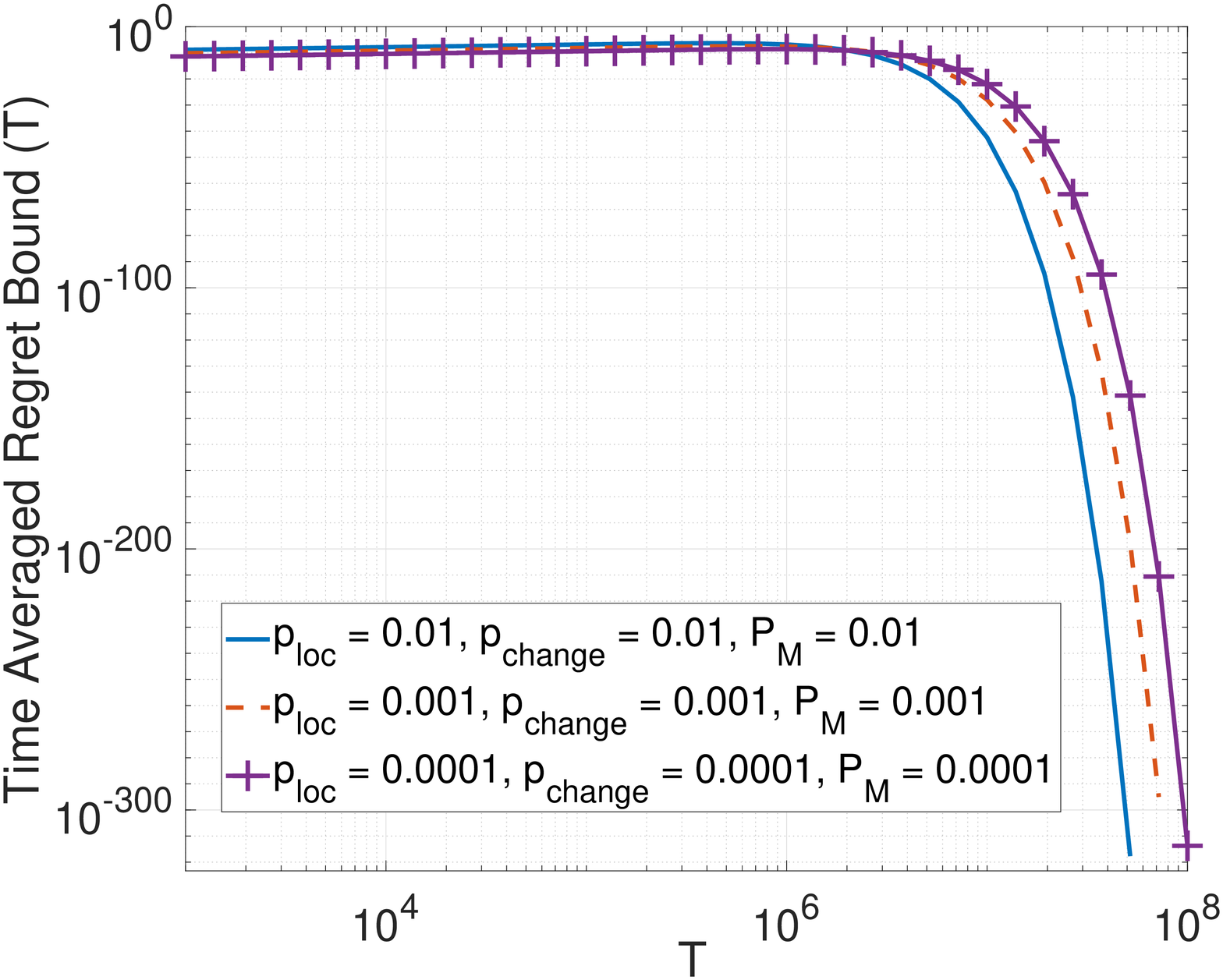}
\label{fig:regret}
}
\caption{(a) Minimum number of time-slots for which the \ac{MAB} framework should be stationary for a given $p_{loc}$, $\Delta_\mu$, and $\epsilon$, (b) Minimum number of time-slots required to detect a change for a given $\mathcal{P}_M$ and $\mathcal{P}_F$, (c) Maximum value of $\lambda_A$ for given $p_{change}$ and $p_{loc}$, (d) Bound on time averaged regret.}
\label{fig:SUBandRAT} 
\end{figure*}



Finally, given the framework developed so far, we derive the regret bound of the \ac{TS-CD} algorithm for arbitrary values of $p_{loc}, p_{change}$, and $\mathcal{P}_M$. We start with the result of Agrawal and Goyal~\cite{agrawal2012analysis} which states that the regret of a stationary \ac{TS} algorithms for the two-armed bandit framework is bounded by the order of $\mathcal{O}(\ln(T))$. Based on this, we derive the following result.
\begin{theorem}
For the two-armed non-stationary bandit problem, with a probability 
\begin{align}
    p_{tot} =  \left(1 - p_{loc}\right)\left(1 - p_{change}\right)\left(1 - \mathcal{P}_M\right),
\end{align}
 the \ac{TS-CD} algorithm has expected regret bound:
\begin{align}
    \mathbb{E}\left[\mathcal{R}(T)\right]  \leq \mathcal{O}\left(\ln\left(T_N\right) \lambda_A T \left[\frac{\Gamma\left(\frac{T}{T_N + n_T}, \lambda_A T\right)}{\Gamma\left(\frac{T}{T_N + n_T}\right)}\right]\right),
\end{align}
in time $T$. Thus, the time expected regret is asymptotically bounded, i.e.,
$
    \lim_{T \to \infty} \frac{1}{T}\mathbb{E}\left[\mathcal{R}(T)\right] = 0.  
$
\end{theorem}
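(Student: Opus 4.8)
The plan is to reduce the non-stationary analysis to the stationary \ac{TS} bound of Agrawal and Goyal~\cite{agrawal2012analysis} by conditioning on a high-probability event on which the change-detection module does exactly what it is designed to do, and then to dispatch the contribution of the complementary (rare) event. Let $N_C(T)$ be the number of changes in $[0,T]$; by \eqref{eq:change} it is $\mathrm{Poisson}(\lambda_A T)$. Define the good event $G$ as the intersection of: (i) every inter-change gap exceeds $T_N+n_T$, so \ac{TS-CD} has room to re-localize the optimal arm and run the test in each piece (the complement of $E_1$); (ii) $\hat\mu_i$ is well-localized in each such piece (the complement of $E_2$), which by Lemma~\ref{lem:T_N} needs $T_N$ plays as in \eqref{eq:TN}; and (iii) the test fires within $n_T$ slots of each change (the complement of $E_3$), which by Lemma~2 needs $n_T$ samples as in \eqref{eq:nT}. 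Since $E_2$ and $E_3$ are only relevant once $E_1$ fails, chaining the three lemma guarantees gives $\mathbb{P}(G)\geq p_{tot}=(1-p_{loc})(1-p_{change})(1-\mathcal{P}_M)$.

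On $G$ the horizon decomposes into $N_C(T)+1$ stationary pieces. Inside a piece, Algorithm~\ref{algo:ts} runs a fresh copy of \ac{TS} for $T_N$ steps and then, via steps~14--15, repeatedly plays the correctly identified optimal arm until the next change; hence the regret of that piece is at most the stationary \ac{TS} regret over $T_N$ rounds, $\mathcal{O}(\ln T_N)$ by~\cite{agrawal2012analysis}, plus the detection lag whose cost is at most $n_T(\mu_{\max}-\mu_{\min})=\mathcal{O}(n_T)$; since $n_T\ll T_N$ (Fig.~\ref{fig:n_T}, Fig.~\ref{fig:T_N}) the per-piece regret is $\mathcal{O}(\ln T_N)$. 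This is exactly where Assumption~\ref{assumption:mean} is used — it lets us invoke the stationary \emph{expected} play-count inside each piece — and the forced plays of steps~14--15 are the stated relaxation. On $G^{c}$ we bound the per-slot regret trivially by $\mu_{\max}-\mu_{\min}$, contributing $\mathcal{O}(T)\,\mathbb{P}(G^{c})$.

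Taking expectations, $\mathbb{E}[\mathcal{R}(T)]\leq \mathcal{O}(\ln T_N)\,\mathbb{E}\!\left[(N_C(T)+1)\mathbf{1}_{G}\right]+\mathcal{O}(T)\,\mathbb{P}(G^{c})$. The first expectation is a truncated Poisson first moment: restricting a $\mathrm{Poisson}(\lambda_A T)$ variable to the values compatible with the room constraint $E_1^{c}$, i.e. to at most $T/(T_N+n_T)$ changes, yields $\lambda_A T$ times the regularized incomplete-gamma ratio $\Gamma\!\big(\tfrac{T}{T_N+n_T},\lambda_A T\big)/\Gamma\!\big(\tfrac{T}{T_N+n_T}\big)$, which reproduces the stated bound after absorbing constants. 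For the asymptotic claim, divide by $T$: the prefactor $\mathcal{O}(\ln T_N)\lambda_A$ is constant, so it remains to control the incomplete-gamma factor together with $\mathbb{P}(G^{c})$. Here Lemma~3 is the key input: taking $\lambda_A\leq \frac{1}{T_N+n_T}\ln\frac{1}{1-p_{change}}$ keeps the Poisson mean a constant factor away from the trackability threshold $T/(T_N+n_T)$, so a Chernoff bound for the Poisson law forces the relevant tail to decay exponentially in $T$, and $\lim_{T\to\infty}\frac1T\mathbb{E}[\mathcal{R}(T)]=0$ follows.

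The main obstacle is the passage from the per-piece stationary bound to the global expectation: the number of pieces, their lengths, and the \ac{TS} trajectory within each are all random and coupled through the Poisson change process, so the fixed-horizon guarantee of~\cite{agrawal2012analysis} must be invoked uniformly over pieces of random length (the $T_N$-cap and steps~14--15 are what make this legitimate), and then the truncated Poisson moment must be \emph{identified} with the incomplete-gamma expression rather than merely bounded, while the $\mathcal{O}(T)\,\mathbb{P}(G^{c})$ remainder is shown to be $o(T)$. Getting the trackability threshold, the Poisson tail estimate, and the frequency bound of Lemma~3 to line up quantitatively is the delicate point of the argument.
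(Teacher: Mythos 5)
Your argument follows essentially the same route as the paper's: condition on the simultaneous non-occurrence of $E_1$, $E_2$, $E_3$ (probability $p_{tot}$), charge each stationary piece the stationary \ac{TS} regret $\mathcal{O}(\ln T_N)$ from~\cite{agrawal2012analysis} (with steps 14--15 covering the post-$T_N$ phase), count the pieces by the Poisson number of changes truncated at $T/(T_N+n_T)$, and identify the truncated first moment $\sum_{k} k\, e^{-\lambda_A T}(\lambda_A T)^k/k!$ with $\lambda_A T\,\Gamma\!\left(\tfrac{T}{T_N+n_T},\lambda_A T\right)/\Gamma\!\left(\tfrac{T}{T_N+n_T}\right)$. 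That is exactly the paper's computation, and your bookkeeping of which lemma supplies which ingredient ($T_N$ from Lemma~\ref{lem:T_N}, $n_T$ from Lemma~2, the $\lambda_A$ cap from Lemma~3) is correct.

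The one genuine problem is your attempt to upgrade the result to an \emph{unconditional} expectation bound by adding the term $\mathcal{O}(T)\,\mathbb{P}(G^{c})$ and then asserting that this remainder is $o(T)$. It is not: $p_{loc}$, $p_{change}$ and $\mathcal{P}_M$ are fixed design constants, so $\mathbb{P}(G^{c})\geq 1-p_{tot}$ does not decay with $T$ and the remainder is $\Theta(T)$. The theorem is deliberately phrased as a conditional (``with probability $p_{tot}$'') statement to avoid exactly this; you must either retain the conditioning on $G$, as the paper does, or let the failure probabilities shrink with $T$, which the lemmas as stated do not provide. Relatedly, your Chernoff step points the wrong way for the limit: under Lemma~3 the Poisson mean $\lambda_A T$ sits \emph{below} the truncation level $T/(T_N+n_T)$, so the tail you control indeed vanishes, but that drives the incomplete-gamma ratio toward $1$ rather than $0$, and so does not by itself yield $\tfrac{1}{T}\mathbb{E}[\mathcal{R}(T)]\to 0$. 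Two smaller points: chaining the three lemma guarantees into a product requires an independence or nesting argument you only gesture at, and absorbing the detection-lag cost $n_T(\mu_{\max}-\mu_{\min})$ into $\mathcal{O}(\ln T_N)$ needs $n_T=\mathcal{O}(\ln T_N)$, which ``$n_T\ll T_N$'' does not give; both are glossed over in the paper as well, so they are not defects unique to your write-up, but the $\mathcal{O}(T)\,\mathbb{P}(G^{c})$ term is an error you introduced and should remove.
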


\begin{proof}
In the \ac{TS-CD} framework, each time a change occurs, the player follows the classical \ac{TS} algorithm for $T_N$ time-slots, thereby incurring a regret of $\ln(T_N)$. Thereafter, under the assumption of well-localization of the mean rewards of the arms, the best-arm (during that stationary regime) is played repeatedly until the change is detected. Hence, in a period of $T$ time-slots, the total regret is given by $\eta_T \ln(T_N)$, where $\eta_T$ is the number of changes within time $T$. Let us recall that $\eta_T$ is a Poisson distributed random variable. Furthermore, we recall our assumption that the bandit framework remains constant for at least $T_N$ time-slots after a change and that we need $n_T$ samples to detect the change. Hence, the maximum number of changes in a time-window $T$ is bounded by $\frac{T}{T_N + n_T}$. Accordingly, we bound the regret as:
\begin{align}
  & \mathbb{E}\left[\mathcal{R}(T)\right] \leq  \mathcal{O}\left(\mathbb{E}_{\eta_T}\left[\eta_T \ln\left(T_N\right)\cdot \mathds{1}\left(\eta_T \leq \frac{T}{T_N + n_T}\right)\right]\right),   \nonumber \\
  &= \mathcal{O}\left(\ln\left(T_N\right) \sum_{k = 0}^{\frac{T}{T_N + n_T}} k \ln\left(T_N\right) \exp\left(-\lambda_A T\right)\frac{(\lambda_AT)^k}{k!}\right), \nonumber \\
  & = \mathcal{O}\left(\ln\left(T_N\right) \lambda_A T \left[\frac{\Gamma\left(\frac{T}{T_N + n_T}, \lambda_A T\right)}{\Gamma\left(\frac{T}{T_N + n_T}\right)}\right]\right). \nonumber 
\end{align}
\end{proof}
It is evident that $\lim_{T \to \infty} \frac{1}{T}\mathbb{E}\left[\mathcal{R}(T)\right] = 0$. In Fig.~\ref{fig:regret}, we plot the value of time-averaged regret with respect to $T$ for different values of $p_{loc}$, $p_{change}$, and $\mathcal{P}_M$. We see that as we change the values of the probability bounds from 0.01\% to 1\%, we see that the time-averaged regret bound decreases to 0 by $10^8$ samples. In the next section, we employ the proposed \ac{TS-CD} algorithm to facilitate dynamic \ac{RAT} selection in the edge of a wireless network.

\vspace{-0.3cm}
\section{Case-Study: {RAT Selection} in the Edge}
\label{sec:CS}
The future wireless applications will be characterized by a tremendous increase in demand for data-rates. Among other enabling technologies, transmission in high-frequency ranges, especially in the \ac{mm-wave} spectrum is a promising solution. However, \ac{mm-wave} transmissions suffer from several limitations, such as detrimental path-loss and high sensitivity to blockages~\cite{white}. Consequently, it is evident that the first generation of \ac{mm-wave} \ac{AP} deployment must necessarily be complemented by the existing cellular architecture. Additionally, due to the environment dynamics, such as human and vehicular blockages, the \ac{UE} association to the different \acp{AP} and to different available frequency bands need to be dynamic. In case of \ac{mMTC} and \ac{URLLC} applications, considering the latency constraints and the overheads involved for connecting thousands of devices to the internet, such association must necessarily be decentralized. 

In this section, we investigate a band-switching scheme modeled as the two-armed bandit problem, and study the efficacy of the \ac{TS-CD} algorithm. {In particular, we explore the performance of a blind RAT selection policy, in which, the BS instructs the UE to band switch to a different band without any need for a measurement gap~\cite{mismar2019deep}.}
First, let us discuss the system model under consideration. {The case with higher number of candidate RATs, e.g., multiple sub-6GHz bands, mm-wave and visible light communication etc. is an open problem as far as a theoretical analysis is concerned. The assumption of the dual RAT architecture is motivated by current studies, e.g., see~\cite{5783993} which provides an overview on such dual-RAT architectures that can be used to transmit control and data signals, respectively, at sub-6GHz and millimeter wave frequency bands. The important interplay between the mm-wave band and the sub-6GHz band in a dual RAT architecture is also highlighted in the European project mmMagic~\cite{mmMagic} and the work by Kangas {\it et al.}~\cite{kangas2013angle}.}

{\bf System Model: }
We consider a wireless network consisting of \acp{AP} on the two-dimensional Euclidean plane. The location of the \acp{AP} are modelled as points of a homogeneous \ac{PPP} $\phi$, with intensity $\lambda$. Without loss of generality, we perform our analysis from the perspective of the typical {pedestrian} user located at the origin, and the typical user connects to the \ac{AP} with the strongest downlink received power. The \acp{AP} are assumed to operate in two \acp{RAT}, sub-6GHz band and the \ac{mm-wave} band to provide ad-hoc coverage and enhanced data-rates~\cite{ghatak2018coverage}. To simplify the notation, let us denote the \ac{RAT} with $r$, where $r \in \{m,s\}$ stands for \ac{mm-wave} and sub-6GHz, respectively. Similarly, let us denote the visibility state by $v \in \{L,N\}$ for \ac{LOS} and \ac{NLOS}, respectively. We assume that the received power at the typical user from a \ac{AP} at a distance $x$ from the user is given by $K_rP_rx^{-\alpha_{rv}}$, where $\alpha_{rv}$ is the path-loss exponent for \ac{RAT} $r$ and visibility state $v$, $K_r$ is the path-loss constant for \ac{RAT} $r$, and $P_r$ is the transmit power from the \ac{AP} in \ac{RAT} $r$. The noise power in \ac{RAT} $r$ is denoted by $\sigma_{N,r}^2$. 
In case of mm-wave operations, the {received powers take advantage of} the directional antenna gain of the transmitter and the receiver. {The user and the serving BS are assumed to be aligned, whereas the interfering BSs are randomly oriented with respect to the typical user.}
{Here, we assume a tractable model, where the product of the transmitter and receiver antenna gains, $G$, takes on the values $a_k$ with probabilities $b_k$ as given in Table 1 of \cite{bai2015coverage}. Let the maximum value of $G$ be $G_0$. }

For a given \ac{AP}, the channel visibility state ($v$) in sub-6GHz is assumed to be the same as that in \ac{mm-wave}. {Note that in general, the propagation characteristics in the different bands get affected differently on account of user mobility. However, in this study, we assume that the transitions in both the bands occur simultaneously. This is because, the probability of a signal to be blocked mainly depends on the  blockage process, which is independent of the carrier frequency, e.g., see~\cite{bai2014analysis} for a complete statistical analysis of the same.} Due to the blockages, from the perspective of the typical user, $\phi$ can be further categorized into either \ac{LOS} or \ac{NLOS} processes: {$\phi_{L}$ and $\phi_{N}$, respectively}. The intensity of these modified processes are given by $p(x)\lambda$ and $(1 - p(x))\lambda$, respectively, where $p(x)$ is the probability of a \ac{AP} at a distance $x$ to be in \ac{LOS} with respect to the typical user. For tractability we assume the following \ac{LOS} function~\cite{ghatak2018coverage}: $p(x) = 1; x \leq d$ and $ 0; x > d$. That is, any \ac{AP} within a distance $d$ from the user is assumed to be in \ac{LOS}, and any \ac{AP} beyond a distance $d$ from the user is assumed to be in \ac{NLOS}, where $d$ is the \ac{LOS} ball radii~\cite{ghatak2018coverage}. To study the \ac{SINR} performance, first, the {path-loss processes are reformulated} as one dimensional processes, $\phi'_{vr} = \{\xi_{vr,i}: \xi_{vr,i} = \frac{||x_i||^{\alpha_{vr}}}{K_{r}P_r} , x_i \in \phi_{v}\}$, $v\in \{L,N\}$, $r\in \{s, m\}$. The processes $\phi'_{vr}$ are non-homogeneous with intensity measures derived in \cite{ghatak2018coverage}.


{\bf Characterization of the Mean Rewards: }
Next, we discuss the band-switching scheme in the context of the \ac{MAB} framework. In this scenario, arm 1 represents the sub-6GHz transmission and arm 2 corresponds to \ac{mm-wave} transmission. We assume that the \ac{AP} to user link transition from LOS to NLOS state at unknown time instants, e.g., due to the mobility of the users, resulting in the communication link being obstructed by buildings. {For the experiments, we select $\lambda_C = 0.0005$ (i.e., on an average a change every 2e3 time-steps). This is consistent with stationary periods assumed in the literature, e.g., see~\cite{yu2009piecewise, cao2019nearly}. Let us recall here that each time-step consists of one play of the arm, which occurs at every sub-frame. In the flexible frame structure offered by 5G, we assume a sub-frame duration of 1 ms, which corresponds to the assumption of a stationary regime of about 2 seconds. Now, since our application caters to the outdoor pedestrian users, which generally move with slow speeds, we assume that the stationary regime considered in the paper holds. In particular, for the LOS regime, due to the massive antenna gain of the mm-wave transmissions, we have $\Delta_\mu \geq 0.3$. Additionally, since $n_T << T_N$, the effect of $\Delta_m$ is limited on the required stationary duration. Then, for an $\epsilon$ of 0.01, the stationary regime for the events $E_1$, $E_2$, and $E_3$ to not occur is less than 2 seconds, with $p_{loc} = 0.01$. This validates our assumption and the applicability of \ac{TS-CD}.} Let the rewards of the arms be represented by the \ac{SINR} coverage probability\footnote{The \ac{SINR} coverage probability is the probability that the typical user receives an \ac{SINR} greater than a threshold. Ergodically this represents the fraction of the users in the network under coverage.} of the user for a threshold $\gamma$. Consequently, the mean of each arm follows a two-state Markov model, based on the visibility state. For arm $a_i$, corresponding to \ac{RAT} $r$, the rewards change in the manner: $\mu_{r,{L}} \to \mu_{r,{N}} \to \mu_{r,{L}} \to \cdots$. It must be noted that a transition in one arm coincides with the transition in the other arm, since, the visibility state is the same for both the bands.

\begin{lemma} The \ac{SINR} coverage probability, given that the user is receiving services in the sub-6GHz band from an \ac{AP} at a distance $x$, being in visibility state $v$, is given by: 
\begin{align}
\mu_{s,v} =  \mathbb{P}_{Cvs}(\gamma) = \exp{\left(-\gamma\cdot \sigma_{N,s}^2 \cdot x- \sum_{v'} A_{v'}(\gamma,x)\right)}\nonumber,
\end{align}
where,
$
A_{v'}= \int\limits_{x}^\infty\frac{\gamma x}{y + \gamma x}  \Lambda'_{v's}(dy)\nonumber, \quad \forall \; v' \in \{L,N\}.
$
\label{lem:sub6}
\end{lemma}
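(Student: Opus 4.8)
The plan is to follow the standard recipe for \ac{SINR} coverage in Poisson networks, adapted to the two-class (\ac{LOS}/\ac{NLOS}) non-homogeneous path-loss processes $\phi'_{vr}$. Throughout I will abuse notation and let $x$ denote the \emph{transformed} distance $\xi = \|x_0\|^{\alpha_{sv}}/(K_s P_s)$ of the serving \ac{AP}, so that under Rayleigh fading with unit-mean exponential channel power $h_0$, the received signal power is $h_0/x$ and the \ac{SINR} equals $h_0/\big(x(\sigma_{N,s}^2 + I)\big)$, where $I = \sum_{v'\in\{L,N\}} \sum_{\xi_i \in \phi'_{v's},\, \xi_i > x} h_i/\xi_i$ is the aggregate interference from all the other sub-6GHz \acp{AP}. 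The restriction $\xi_i > x$ is precisely the event that the tagged \ac{AP} provides the strongest sub-6GHz received power, which is how association was defined.

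First I would condition on $I$ and exploit $h_0 \sim \mathrm{Exp}(1)$ to write
\begin{align}
\mathbb{P}_{Cvs}(\gamma) &= \mathbb{P}\!\left(h_0 > \gamma x (\sigma_{N,s}^2 + I)\right) = \mathbb{E}_I\!\left[\exp\!\left(-\gamma x \sigma_{N,s}^2 - \gamma x I\right)\right] \nonumber \\
&= \exp\!\left(-\gamma x \sigma_{N,s}^2\right)\, \mathcal{L}_I(\gamma x), \nonumber
\end{align}
which isolates the noise contribution $\exp(-\gamma x \sigma_{N,s}^2)$ and reduces the problem to evaluating the Laplace transform $\mathcal{L}_I$ of the interference at $\gamma x$.

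Next I would compute $\mathcal{L}_I(\gamma x)$. Since the \ac{LOS} and \ac{NLOS} processes are independent, $\mathcal{L}_I$ factorizes over $v'$, and for each class the \ac{PGFL} of the non-homogeneous \ac{PPP} $\phi'_{v's}$ restricted to $\{y > x\}$, together with averaging over the i.i.d.\ unit-mean exponential interferer powers $h_i$, gives
\begin{align}
\mathcal{L}_I(\gamma x) &= \prod_{v'} \exp\!\left(-\int_x^\infty \Big(1 - \mathbb{E}_{h}\big[e^{-\gamma x h / y}\big]\Big)\, \Lambda'_{v's}(dy)\right) \nonumber \\
&= \prod_{v'} \exp\!\left(-\int_x^\infty \frac{\gamma x}{y + \gamma x}\, \Lambda'_{v's}(dy)\right) = \exp\!\left(-\sum_{v'} A_{v'}(\gamma, x)\right), \nonumber
\end{align}
where I used $\mathbb{E}_h[e^{-sh/y}] = (1 + s/y)^{-1}$ and the definition of $A_{v'}$. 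Combining the two displays yields the claimed expression, with the intensity measures $\Lambda'_{v's}$ taken from the reformulation in~\cite{ghatak2018coverage}.

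The \ac{PGFL} manipulation is routine; the only point requiring care is the interference support. I expect the main obstacle to be justifying that the interference sum runs over $\{\xi_i > x\}$ in \emph{every} visibility class simultaneously --- i.e., that conditioning on the serving \ac{AP} being at transformed distance $x$ deletes all \acp{AP} (\ac{LOS} and \ac{NLOS}) with smaller $\xi_i$, while the residual point processes on $\{y > x\}$ remain Poisson with the stated non-homogeneous intensity measures. This is inherited from the mapping/independent-thinning structure of $\phi'_{vr}$ established in~\cite{ghatak2018coverage}; once it is invoked, the computation above is immediate.
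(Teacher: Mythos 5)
Your proof is correct and follows exactly the standard Laplace-transform/PGFL recipe that the paper itself invokes only by reference (its entire proof is ``similar to that in~\cite{ghatak2018coverage}''), so you have in effect supplied the derivation the paper omits. The one point worth making explicit is that your argument requires unit-mean exponential (Rayleigh) channel gains on both the serving and interfering links --- an assumption the paper's system model never states (it only gives the deterministic path loss $K_rP_rx^{-\alpha_{rv}}$) but which is inherited from the cited reference and is indispensable for both the noise factor $\exp(-\gamma\sigma_{N,s}^2 x)$ and the $\frac{\gamma x}{y+\gamma x}$ kernel in $A_{v'}$.
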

\begin{proof}
        The proof is similar to that in~\cite{ghatak2018coverage}.
\end{proof}

\begin{lemma} The \ac{SINR} coverage probability, given that the user is receiving services in the mm-wave from an \ac{AP} located at distance $x$, being in visibility state $v$, is given by:
\begin{align}
&\mu_{m,v} = \mathbb{P}_{Cvm}(\gamma) \nonumber \\
&=  \exp\left(-\frac{\gamma \cdot x \cdot \sigma_{N,m}^2}{G_0} - B_1(\gamma,x) - B_2(\gamma,x)\right), \\ 
  &\mbox{with\quad} B_1(\gamma,x) = \sum_{k=1}^4 \left(-b_k \int\limits_x^\infty\left(\frac{a_k\gamma x}{y + a_k\gamma x} \Lambda'_{vm}(dy) \right)\right), \nonumber \\
  &\mbox{and,\quad}  B_2(\gamma,x) = \sum_{k=1}^4  \left(-b_k \int\limits_x^\infty\left(\frac{a_k\gamma x}{y + a_k\gamma x} \Lambda'_{v'm}(dy) \right)\right) \nonumber .
	\end{align}
    \end{lemma}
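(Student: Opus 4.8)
The plan is to follow the standard stochastic-geometry recipe used for Lemma~\ref{lem:sub6} and~\cite{ghatak2018coverage}, adding one extra averaging step to absorb the random directional gains. First I would condition on the event that the serving \ac{AP} corresponds to the point $x$ of the reformulated one-dimensional process $\phi'_{vm}$; since the user associates to the strongest received power, this point is the smallest of the $\xi_{vm,i}$, so every interferer contributes a point $y\ge x$. Assuming Rayleigh fading — i.e.\ channel power gains i.i.d.\ unit-mean exponential, exactly as in the sub-6GHz case whose functional form in Lemma~\ref{lem:sub6} makes this explicit — and using that the typical user is beam-aligned with its server (gain $G_0$) while each interferer carries an independent mark $G_y=a_k$ w.p.\ $b_k$, the coverage event $\mathrm{SINR}\ge\gamma$ reads $h_0 \ge (\gamma x/G_0)\bigl(\sigma_{N,m}^2 + I\bigr)$, where $I=\sum_y G_y h_y/y$ is the aggregate mm-wave interference (with the appropriate $G_0$-normalization carried through).

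Because $h_0\sim\exp(1)$ and is independent of $I$, integrating over $h_0$ factorizes the coverage probability into the deterministic noise term $\exp\!\bigl(-\gamma x\,\sigma_{N,m}^2/G_0\bigr)$ and the Laplace transform $\mathcal{L}_I(\gamma x/G_0)$. The next step is to split the interferers into the in-visibility process (state $v$) and the complementary-visibility process (state $v'$), which by construction are independent non-homogeneous \acp{PPP} with intensity measures $\Lambda'_{vm}$ and $\Lambda'_{v'm}$; hence $\mathcal{L}_I$ splits as $\exp(-B_1)\exp(-B_2)$. For each factor I would apply the \ac{PGF} of the \ac{PPP}: the per-point functional $1-\mathbb{E}_{G,h}\bigl[\exp(-(\gamma x/G_0)\,Gh/y)\bigr]$ is evaluated by first averaging over the exponential $h$ to get $1-\mathbb{E}_G\bigl[(1+(\gamma x/G_0)G/y)^{-1}\bigr]$, then averaging over the discrete gain law to obtain $\sum_{k=1}^4 b_k\,\dfrac{a_k\gamma x}{\,y+a_k\gamma x\,}$. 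Integrating this against $\Lambda'_{vm}$ (resp.\ $\Lambda'_{v'm}$) over $y\ge x$ reproduces exactly $B_1(\gamma,x)$ and $B_2(\gamma,x)$, giving the claimed expression.

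I expect the only real obstacles to be bookkeeping rather than conceptual. The delicate points are: (i) tracking the $G_0$ normalization consistently — the useful link carries $G_0$ while the interferers carry the marks $a_k$, so $G_0$ must appear correctly in both the noise exponent and the argument of $\mathcal{L}_I$; (ii) justifying that each interferer's visibility state, antenna gain, and fading are mutually independent marks, so that the marked-\ac{PPP} \ac{PGF} applies and the gain-average can be pulled inside the integral; and (iii) the integration domain $y\ge x$, which follows from the strongest-received-power association. None of these needs machinery beyond the proof of Lemma~\ref{lem:sub6}; the hardest part is simply carrying the extra discrete expectation over $\{a_k,b_k\}$ through the \ac{PGF} step without dropping constants, so I would do that computation first and then specialize to $v,v'\in\{L,N\}$.
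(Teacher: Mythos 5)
Your proposal is correct and follows essentially the same route as the paper, which itself only states that the proof ``follows in a similar way'' to the sub-6GHz case and defers to the stochastic-geometry derivation in the cited reference: condition on the serving point of $\phi'_{vm}$, integrate out the exponential fading of the useful link to obtain the noise factor and the interference Laplace transform, split the interferers into the independent processes with intensities $\Lambda'_{vm}$ and $\Lambda'_{v'm}$, and apply the \ac{PPP} probability generating functional with the discrete gain marks $\{a_k,b_k\}$ averaged inside the per-point functional. The only caveat is notational rather than substantive: with $B_1,B_2$ defined with the inner minus sign as printed, the exponent $-B_1-B_2$ acquires the wrong overall sign relative to what your (correct) derivation produces, so the stated formula should be read with that sign typo corrected.
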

\begin{proof}
		The proof follows in a similar way to that of Lemma~\ref{lem:sub6}.
\end{proof}
{Before proceeding to the numerical results section, it is important to note that in general, the RAT-selection penalty is non-negligible and is associated with not only a receiver reconfiguration overhead, but also a non-zero switching delay, which we ignore in the experiment due to the blind switching policy, and to be consistent with the Algorithm presented in Section 2.2. Nevertheless, the RAT configuration delay and the overhead, e.g., for legacy switching schemes can be integrated into the proposed \ac{TS-CD} framework by introducing a cost or penalty to switch arms. However, a complete analysis of the switching costs on the proposed algorithm, and the corresponding regret bounds will be treated in a future work.}

{\bf Simulation Setup and Results: } In Fig.~\ref{fig:my_label} we compare {the following 6 algorithms in the context of RAT switching. Actively adaptive algorithms: 1)  PHT-UCB, which uses the Page-Hinkely statistic to detect the change~\cite{liu2018change}, 2) TS-CD, which is the proposed algorithm in our work. Passively adaptive algorithms: 3) REXP3~\cite{besbes2019optimal}, 4) Discounted TS (DTS)~\cite{raj2017taming}, 5) Discounted UCB (D-UCB)~\cite{garivier2008upper}, and 6) Max-power association (called {\it Fixed}), which is a classical association rule in legacy wireless networks. } {We clearly see that the \ac{TS-CD} algorithm not only outperforms the static association rule based on maximum received power, but also performs better than the other contending passively adaptive and actively adaptive bandit algorithms.}
{
In addition, we plot the respective variances of the different algorithms. The static association rule is shown to have the maximum variance among all the contending algorithms. This is mainly driven by the assumed variance of the rewards of a particular arm. In comparison, the passively adaptive algorithms have lower variance. However, in the initial stages, i.e., when the number of plays is small, a change in the environment results in an increase in the normalized regret of the passively adaptive algorithms.}

{
On the contrary, the actively adaptive algorithms decrease the variance even more, establishing their higher stability as compared to the passively adaptive algorithms. Second, we show an example of the delay of change detection in the zoomed portion of the figure. Since in our case study, the difference between the \ac{SINR} coverage probability of mm-wave RAT in \ac{LOS} and \ac{NLOS} is considerably large, we have a very high $\Delta_\mu$ and $\Delta_m$. This results in a significantly quick detection of the change as compared to the recently investigated PHT~\cite{liu2018change}.
}
\begin{figure}
    \centering
    \includegraphics[width = 0.4\textwidth]{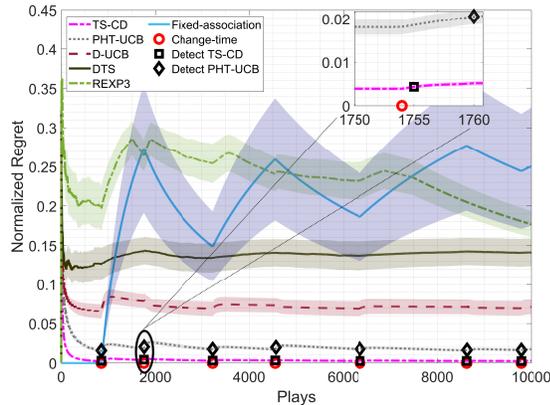}
    \caption{Time-averaged regret for different strategies.}
    \label{fig:my_label}
\end{figure}

Currently, we are investigating more efficient change detection policies based on goodness of fit tests. {Additionally, the sensitivity analysis of the algorithm with respect to the different system parameters of the example scenario concerned is a key direction of research, which we will address in a future work}

\vspace{-0.5cm}
\section{Conclusion}
\label{sec:Con}
In this paper, we have investigated a change-detection based Thompson Sampling algorithm, named TS-CD, to keep track of the dynamic environment in the non-stationary two-armed bandit problem. We have derived the lower bound on the stationary regime time-window for TS-CD to efficiently detect the changes when they occur. Finally, we show that for given bounds on the frequency of changes, the proposed TS-CD algorithm reaches asymptotic optimality. To test the efficacy of the algorithm, we employ it in the RAT selection problem in a wireless network edge. We have shown that TS-CD not only outperforms the classical max-power band selection scheme, but also, it outperforms other bandit algorithms designed for dynamic environments.

\vspace{-0.2cm}
\bibliographystyle{IEEEtran}
\bibliography{refer.bib}

\end{document}